\newcommand{\mmd}{\text{MMD}}
\newtheorem{definition}{\textbf{Definition}}
\newtheorem{corollary}{\textbf{Corollary}}
\newtheorem{lemma}{\textbf{Lemma}}
\newtheorem{theorem}{\textbf{Theorem}}
\newtheorem{proposition}{\textbf{Proposition}}
\newtheorem{remark}{\textbf{Remark}}
\newcommand{\nn}{\nonumber}
\newcommand{\mE}{\mathbb{E}}
\newcommand{\cH}{\mathcal{H}}
\newcommand{\cN}{\mathcal{N}}
\newcommand{\cP}{\mathcal{P}}
\newcommand{\cI}{\mathcal{I}}
\newcommand{\tq}{\tilde{q}}
\DeclareMathAlphabet{\matheuf}{U}{euf}{m}{n}
\begin{document}
\title{Nonparametric Detection of Anomalous Data Streams}
\author{Shaofeng Zou, Yingbin Liang, {\em Senior Member, IEEE},\\H. Vincent Poor, {\em Fellow, IEEE}, and Xinghua Shi
\let\thefootnote\relax\footnote{The material in this paper was presented
in part in ~\cite{zou2014unsupervised} at the 52th Annual Allerton Conference on Communication, Control, and Computing, Monticello, IL, Oct. 2014.}
\let\thefootnote\relax\footnote{The work of S. Zou and Y. Liang was supported by a National Science Foundation
CAREER Award under Grant CCF-10-26565. The work of H. V. Poor was supported by the National Science Foundation under Grants CNS-14-56793 and ECCS-13-43210. The work of X. Shi was partly supported by National Science Foundation under Grant IIS-1502172.}
\let\thefootnote\relax\footnote{Shaofeng Zou is with the Department of Electrical and Computer Engineering and Coordinated Science Laboratory,
University of Illinois at Urbana Champaign, Urbana, IL 61801 USA (email: szou3@illinois.edu).
Yingbin Liang are with the Department of Electrical
Engineering and Computer Science, Syracuse University, Syracuse, NY 13244 USA (email: yliang06@syr.edu).
H. Vincent Poor is with the Department of Electrical Engineering, Princeton University, Princeton, NJ 08544 USA (email: poor@princeton.edu).
Xinghua Shi is with the Department of Bioinformatics and Genomics, University of North Carolina at Charlotte, Charlotte, NC 28223 (email: xshi3@uncc.edu).}
}
\date{}
\maketitle

\begin{abstract}
A nonparametric anomalous hypothesis testing problem is investigated, in which there are totally $n$ sequences with $s$ anomalous sequences to be detected. Each typical sequence contains $m$ independent and identically distributed (i.i.d.) samples drawn from a distribution $p$, whereas each anomalous sequence contains $m$ i.i.d.\ samples drawn from a distribution $q$ that is distinct from $p$. The distributions $p$ and $q$ are assumed to be unknown in advance. Distribution-free tests are constructed using maximum mean discrepancy  as the metric, which is based on mean embeddings of distributions into a reproducing kernel Hilbert space. The probability of error is bounded as a function of the sample size $m$, the number $s$ of anomalous sequences and the number $n$  of sequences. It is then shown that with $s$ known, the constructed test is exponentially consistent if $m$ is greater than a constant factor of $\log n$, for any $p$ and $q$, whereas with $s$ unknown, $m$ should has an order strictly greater than $\log n$.
Furthermore, it is shown that no test can be consistent for arbitrary $p$ and $q$ if $m$ is less than a constant factor of $\log n$, thus the order-level optimality of the proposed test is established. Numerical results are provided to demonstrate that our tests outperform (or perform as well as) the tests based on other competitive approaches under various cases.
\end{abstract}
{\bf Key words:}
  Anomalous hypothesis testing, consistency, distribution-free tests, maximum mean discrepancy (MMD).

%
%

\section{Introduction}\label{sec:introduction}
\setcounter{page}{1}

In this paper, we study an anomalous hypothesis testing problem (see Figure~\ref{fig:model}), in which there are totally $n$ sequences out of which $s$ anomalous sequences need to be detected. Each \emph{typical} sequence consists of $m$ independent and identically distributed (i.i.d.) samples drawn from a distribution $p$, whereas each \emph{anomalous} sequence contains i.i.d.\ samples drawn from a distribution $q$ that is distinct from $p$. The distributions $p$ and $q$ are assumed to be unknown. The goal is to build distribution-free tests to detect the $s$ anomalous data sequences generated by $q$ out of all data sequences.

Solutions to such a problem are very useful in many applications. For example, in cognitive wireless networks, signals follow different distributions either $p$ or $q$ depending on whether the channel is busy or vacant. A major issue in such a network is to identify vacant channels out of a large number of busy channels based on their corresponding signals in order to utilize vacant channels for improving spectral efficiency. This problem was studied in \cite{Lai2011} and \cite{Tajer2013} under the assumption that $p$ and $q$ are known, whereas in this paper, we study the problem with unknown $p$ and $q$.  Other applications include detecting anomalous DNA sequences out of typical sequences, detecting virus infected computers from other virus free computers, and detecting slightly modified images from other untouched images.

The parametric model of the problem has been well studied, e.g., \cite{Lai2011}, in which it is assumed that the distributions  $p$ and $q$ are known in advance and can be exploited for detection. However, the nonparametric model is less explored, in which it is assumed that the distributions $p$ and $q$ are unknown and can be arbitrary. Recently, Li, Nitinawarat and Veeravalli proposed the divergence-based generalized likelihood tests in \cite{Li2013}, and characterized the error decay exponents of these tests. However, \cite{Li2013} studied only the case when the distributions $p$ and $q$ are discrete with finite alphabets, and their tests utilize empirical probability mass functions of $p$ and $q$.

In this paper, we study the nonparametric model, in which distributions $p$ and $q$ can be continuous and arbitrary. The major challenges to solve this problem (compared to the discrete case studied in \cite{Li2013}) lie in: (1) it is difficult to accurately estimate continuous distributions with limited samples for further anomalous hypothesis testing; (2) it is difficult to design low complexity tests with continuous distributions; and (3) building distribution-free consistent tests (and further guaranteeing exponential error decay) is challenging for arbitrary distributions.


Our approach adopts the {\em maximum mean discrepancy (MMD)} introduced in \cite{Gretton2012} as the distance metric between two distributions. The idea is to map probability distributions into a reproducing kernel Hilbert space (RKHS) (as proposed in \cite{Berl2004,Srip2010}) such that the distance between the two probabilities can be measured by the distance between their corresponding embeddings in the RKHS. MMD can be easily estimated based on samples, and hence yields low complexity tests.
In this paper, we apply MMD as a metric to construct our tests for detecting anomalous data sequences.
In contrast to consistency analysis in classical theory as in \cite{Li2013},
which assumes that the problem dimension (i.e., the number $n$ of sequences and the number $s$ of
anomalous sequences) is fixed and the sample size $m$ increases, our focus is on the regime
in which the problem dimension (i.e., $n$ and $s$) increases.
This is motivated by  applications, in which anomalous sequences are required to be detected out of a large number of typical data sequences. It is clear that as $n$   (and possibly   $s$) becomes large, it is increasingly challenging to consistently detect all anomalous sequences. It then requires that the sample size $m$  correspondingly increases in order to guarantee more accurate detection.  Hence, we are interested in characterizing how the sample size $m$ should scale with $n$ and $s$ in order to guarantee the consistency of our tests.

In this paper, we adopt the following notations to express asymptotic scaling of quantities with $n$:
\begin{list}{$\bullet$}{\topsep=0.ex \leftmargin=0.3in \rightmargin=0.in \itemsep =-0.05in}
\item $f(n)=O(g(n))$: there exist $k,n_0>0$ s.t.\ for all $ n>n_0$, $|f(n)|\leq k|g(n)|$;
\item $f(n)=\Omega(g(n))$: there exist $k,n_0>0$ s.t.\ for all $ n>n_0$, $f(n)\geq kg(n)$;
\item $f(n)=\Theta(g(n))$: there exist $k_1,k_2,n_0>0$ s.t.\ for all $ n>n_0$, $k_1g(n)\leq f(n)\leq k_2g(n)$;
\item $f(n)=o(g(n))$: for all $k>0$, there exists $n_0>0$ s.t.\ for all $ n>n_0$, $|f(n)|\leq kg(n)$;
\item $f(n)=\omega(g(n))$: for all $k>0$, there exists $n_0>0$ s.t.\ for all $ n>n_0$, $|f(n)|\geq k|g(n)|$.
\end{list}

\subsection{Main Contributions}
We summarize our main contributions as follows.

(1) We construct MMD-based distribution-free tests, which enjoy low computational complexity and are proven to be powerful for nonparametric detection.

(2) We analyze the performance guarantee for the proposed MMD-based test. We bound the probability of error as a function of the sample size $m$, the number $s$ of anomalous sequences, and the total number $n$ of sequences. We then show that with $s$ known, the constructed test is exponentially consistent if $m$ scales at the order $\Omega(\log n)$ for any $p$ and $q$, whereas with $s$ unknown, $m$ should scale at the order $\omega(\log n)$ (i.e., strictly larger than $\Omega(\log n)$). Thus, the lack of the information about $s$ results in an order-level increase in sample size $m$ needed for consistent detection. We further develop low complexity consistent tests by exploiting the asymptotic behavior of $s$ and $n$.

(3) We further derive a necessary condition which states that no test can be consistent for arbitrary $p$ and $q$ if $m$ scales at the order $O(\log n)$, thus establishing the order-level optimality of the MMD-based test.

(4) We provide an interesting example study, in which the distribution $q$ is the mixture of the distribution $p$ and the anomalous distribution $\tq$. In such a case, the anomalous sequence contains only sparse samples from the anomalous distribution. Our results for such a model quantitatively characterize the impact of the sparsity level of anomalous samples on the scaling behavior of the sample size $m$, in order to guarantee consistency of the proposed tests.

We provide numerical results to demonstrate our theoretical assertions and compare our tests with other competitive approaches. Our numerical results demonstrate that the MMD-based test has a better performance than the divergence-based generalized likelihood test proposed in \cite{Li2013} when the sample size $m$ is not very large. We also demonstrate that the MMD-based test outperforms (or performs as well as) other competitive tests including t-test, FR-Wolf test \cite{FRwolf1979}, FR-Smirnov test \cite{FRwolf1979}, Hall test \cite{Hall2002} as well as kernel density ratio (KDR) test \cite{Kanamori2012} and kernel Fisher discriminant analysis (KFDA) test \cite{Harchaoui2008}.


\subsection{Related Work}

In this subsection, we review relevant problems and explain their differences from our model. The parametric model of our problem with {\em known} $p$ and $q$ has been studied, e.g., in \cite{Lai2011}. The nonparametric model with unknown $p$ and $q$ were studied recently in \cite{Li2013}, where $p$ and $q$ are assumed to be {\em discrete} distributions. Our study addresses the general scenario in which $p$ and $q$ can be {\em arbitrary} (not necessarily discrete) and {\em unknown}. Furthermore, we allow the sample size to scale with the total number $n$ of sequences (which goes to infinity), whereas \cite{Li2013} studies the regime in which $n$ is fixed and only the sample size goes to infinity.

As generalization of the classical two-sample problem, which tests whether two sets of samples are generated from the same or different distributions, our problem involves much richer ingredients and more technical challenges. Our problem involves interplay of the number $n$ of sequences, the number $s$ of anomalous sequences, and the sample size $m$ to guarantee test consistency, whereas the two sample problem involves only the sample complexity. Furthermore, test consistency in our problem depends on the knowledge of the number of anomalous sequences, whereas the two sample problem does not have such an issue. These new issues naturally require considerably more technical efforts such as analysis of the MMD estimator via samples from mixed distributions, bounding the asymptotic behavior of difference between two MMD estimators, and development of necessary conditions on sample complexity.

A popular type of outlier detection problems have been widely studied in data mining, e.g., \cite{Patcha2007,Chan2009}, in which a number of data samples are given and outliers that are far away from other samples (typically in Euclidean distance) need to be detected. These studies typically do not assume underlying statistical models for data samples, whereas our problem assumes that data are drawn from either distribution $p$ or $q$. Thus, our problem is to detect an outlier {\em distribution} rather than an outlier data sample.

Another related but different model has been studied in \cite{Hero2006,Hero1999asymptotic,Zhao2009}, which tests whether a new sample is generated from the same distribution as a given set of training samples. Such a problem is binary composite hypothesis testing, whereas our problem is multi-hypothesis testing, detecting anomalous sequences out of a set of sequences that contain both typical and anomalous sequences. Furthermore, such a problem assumes availability of a training set of (typical) samples, whereas our problem does not assume any sample known to be typical in advance.

\subsection{Organization of the Paper}

The rest of the paper is organized as follows. In Section \ref{sec:model}, we describe the problem formulation. In Section \ref{sec:withoutreference}, we present our tests and theoretical results on the performance guarantee of these tests. In Section \ref{sec:converse}, we further present the necessary conditions to guarantee test consistency. In Section \ref{sec:simulations}, we provide numerical results. Finally in Section \ref{sec:conclusion}, we conclude our paper with remarks on future work.

\section{Problem Statement}\label{sec:model}


\begin{figure}[htb]
\begin{center}
  \includegraphics[width=6cm]{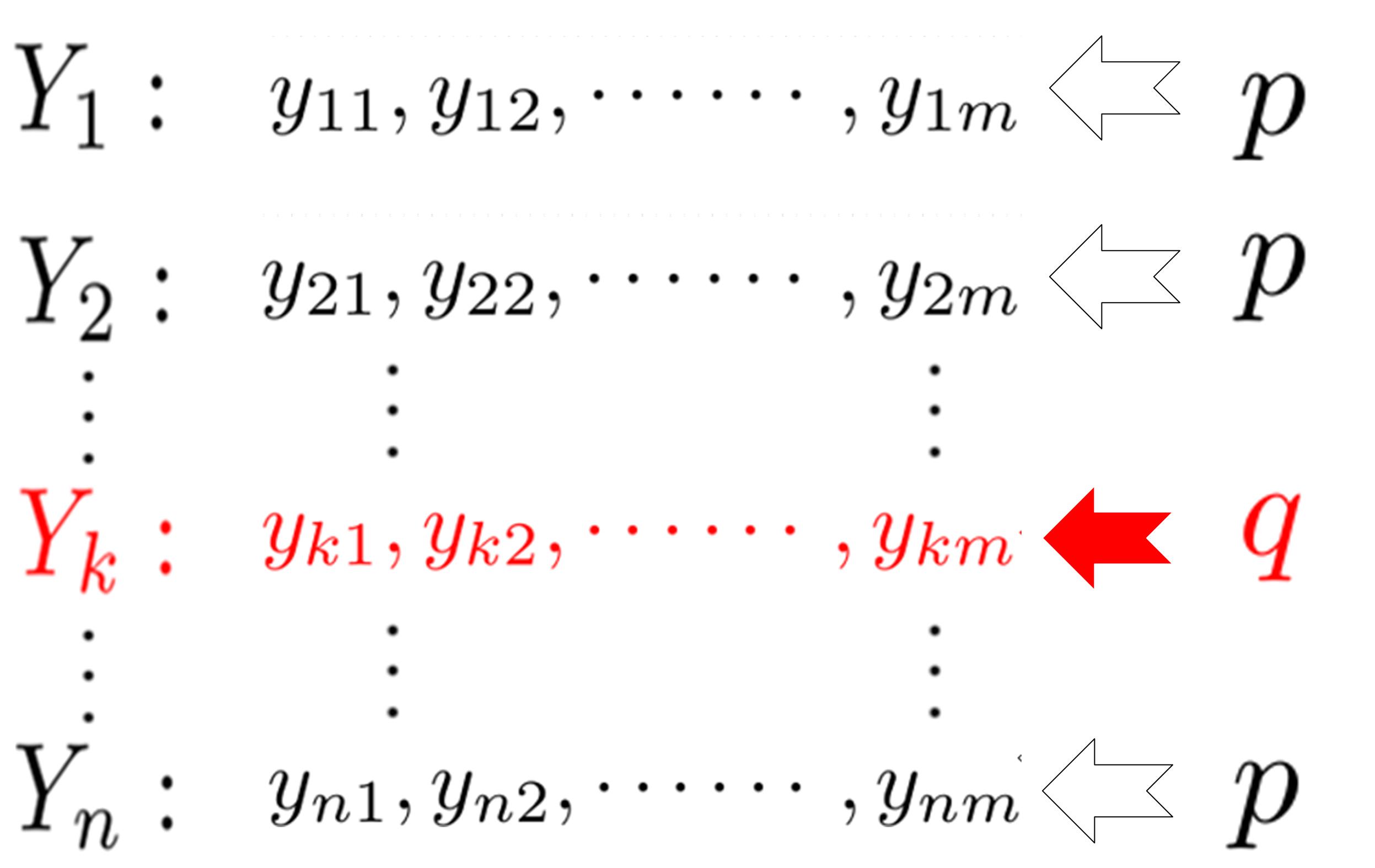}
  \caption{An anomalous hypothesis testing model with data sequences generated by typical distribution $p$ and anomalous distribution $q$.}
  \label{fig:model}
\end{center}
\end{figure}


We study an anomalous hypothesis testing problem (see Figure~\ref{fig:model}), in which there are in total $n$ data sequences denoted by $Y_k$ for $1\leq k\leq n$. Each data sequence $Y_k$ consists of $m$ i.i.d.\ samples $y_{k1},\ldots,y_{km}$ drawn from either a typical distribution $p$ or an anomalous distribution $q$, where $p\neq q$. In the sequel, we use the notation $Y_k:=(y_{k1},\ldots,y_{km})$. We assume that the distributions $p$ and $q$ are arbitrary and unknown in advance. Our goal is to build distribution-free tests to detect data sequences generated by the anomalous distribution $q$.

We assume that $s$ out of $n$ data sequences are anomalous, i.e., are generated by the anomalous distribution $q$. We study both cases with   $s$ known and unknown, respectively. We are interested in the asymptotical regime, in which the number $n$ of data sequences goes to infinity. We assume that the number $s$ of anomalous sequences satisfies $\frac{s}{n}\rightarrow \alpha$ as $n\rightarrow \infty$, where $0\leq\alpha\leq1$. This includes the following three cases: (1) $s$ is fixed, and nonzero as $n \rightarrow \infty$; (2) $s \rightarrow \infty$, but $\frac{s}{n}\rightarrow 0$ as $n \rightarrow \infty$; and (3) $\frac{s}{n}$ approaches to a positive constant, which is less than or equal to $1$. Some of our results are also applicable to the case with $s=0$, i.e., the null hypothesis in which there is no anomalous sequence. We will comment on such a case when the corresponding results are presented.



We next define the probability of detection error as the performance measure of tests. We let $\cI$ denote the set that contains indices of all anomalous data sequences. Hence, the cardinality $|\cI|=s$. We let $\hat{\cI}^n$ denote a sequence of index sets that contain indices of all anomalous data sequences claimed by a corresponding sequence of tests.
\begin{definition}\label{def:consistency}
A sequence of tests are said to be consistent if
\begin{equation}
    \lim_{n\rightarrow\infty}P_e=\lim_{n\rightarrow\infty}P\{\hat\cI^n\neq\cI^n\}=0.
\end{equation}
\end{definition}

We note that the above definition of consistency is with respect to the number $n$ of sequences instead of the number $m$ of samples. However, as $n$ becomes large (and possibly as $s$ becomes large), it is increasingly challenging to consistently detect all anomalous data sequences. It then requires that the number $m$ of samples becomes large enough in order to more accurately detect anomalous sequences. Therefore, the limit in the above definition in fact refers to the asymptotic regime, in which $m$ scales fast enough as $n$ goes to infinity in order to guarantee asymptotically small probability of error.


Furthermore, for a consistent test, it is also desirable that the error probability decays exponentially fast with respect to the number $m$ of samples.
\begin{definition}
A sequence of tests are said to be exponentially consistent if
\begin{equation}
    \liminf_{m\rightarrow\infty}-\frac{1}{m}\log P_e=\liminf_{m\rightarrow\infty}-\frac{1}{m}\log P\{\hat\cI^n\neq\cI^n\} > 0.
\end{equation}
\end{definition}



In this paper, our goal is to construct distribution-free tests for detecting anomalous sequences, and characterize the scaling behavior of $m$ with $n$ (and possibly $s$) so that the developed tests are consistent (and possibly exponentially consistent).

\textbf{An example with sparse anomalous samples.} In this paper, we also study an interesting example, in which the distribution $q$ is a mixture of the distribution $p$ with probability $1-\epsilon$ and an anomalous distribution $\tq$ with probability $\epsilon$, where $0 < \epsilon \leq 1$, i.e., $q=(1-\epsilon)p+\epsilon \tq$. It can be seen that if $\epsilon$ is small, the majority of samples in an anomalous sequence are drawn from the distribution $p$, and only sparse samples are drawn from the anomalous distribution $\tq$. The value of $\epsilon$ captures the sparsity level of anomalous samples. Here, $\epsilon$ can scale as $n$ increases, and is hence denoted by $\epsilon_n$. We study how $\epsilon_n$ affects the number of samples needed for consistent detection.

\section{Test and Performance Guarantee}\label{sec:withoutreference}

We adopt the {\em maximum mean discrepancy (MMD)} introduced in \cite{Gretton2012} as the distance metric to construct our test. More specifically, suppose each distribution $p$ belonging to $\cP$ (a set of probability distributions) is mapped to an element in the RKHS $\cH$ as follows
\[\mu_p(\cdot)=\mE_p [k(\cdot,x)]=\int k(\cdot,x)dp(x), \]
where $k(\cdot,\cdot)$ is the kernel function associated with $\cH$. It has been shown in \cite{Fuku2008,Srip2008} that the above mean embedding mapping is injective for many RKHSs such as those associated with Gaussian and Laplace kernels. The MMD between $p$ and $q$ is defined to be the distance between $\mu_p$ and $\mu_q$ in RKHS given by
\begin{equation}
\mmd[p,q]:=\|\mu_p-\mu_q\|_{\cH}.
\end{equation}
Due to the reproducing property of kernel, it can be easily shown that
\begin{flalign}\label{eq:mmdpq}
\mmd^2[p,q]=&\mE_{x,x'}[k(x,x')]-2\mE_{x,y}[k(x,y)] +\mE_{y,y'}[k(y,y')],
\end{flalign}
where $x$ and $x'$ have independent but the same distribution $p$, and $y$ and $y'$ have independent but the same distribution $q$. An unbiased estimator of $\mmd^2[p,q]$ based on $l_1$ samples of $X$ and $l_2$ samples of $Y$ is given as follows,
\begin{small}
\begin{flalign}\label{eq:mmdu}
&\mmd_u^2[X,Y]=\frac{1}{l_1(l_1-1)}\sum_{i=1}^{l_1}\sum_{j\neq i}^{l_1} k(x_i,x_j) +\frac{1}{l_2(l_2-1)}\sum_{i=1}^{l_2}\sum_{j\neq i}^{l_2} k(y_i,y_j)-\frac{2}{l_1l_2}\sum_{i=1}^{l_1}\sum_{j=1}^{l_2} k(x_i,y_j).
\end{flalign}
\end{small}

In this section, we design and analyze MMD-based tests for both cases with $s$ known and unknown, respectively. We then study the example with sparse anomalous samples.

\subsection{Known $s$}\label{sec:sufficient}

In this subsection, we consider the case with $s$ known. We start with a simple case with $s=1$, and then study the more general case, in which $\frac{s}{n}\rightarrow \alpha$ as $n\rightarrow \infty$, where $0\leq \alpha\leq1$.

Consider the case with $s=1$. For each sequence $Y_k$, we use $\overline Y_k$ to denote the $(n-1)m$ dimensional sequence that stacks all other sequences together, as given by
\[\overline Y_k=\{Y_1,\ldots,Y_{k-1},Y_{k+1},\ldots,Y_n\}.\]
We then compute $\mmd_u^2[Y_k,\overline Y_k]$ for $1\leq k\leq n$. It is clear that if $Y_k$ is the anomalous sequence, then $\overline Y_k$ is fully composed of typical sequences. Hence, $\mmd_u^2[Y_k,\overline Y_k]$ is a good estimator of $\mmd^2[p,q]$, which is a positive constant. On the other hand, if $Y_k$ is a typical sequence, $\overline Y_k$ is composed of $n-2$ sequences generated by $p$ and only one sequence generated by $q$. As $n$ increases, the impact of the anomalous sequence on $\overline Y_k$ is negligible, and $\mmd_u^2[Y_k,\overline Y_k]$ should be asymptotically close to zero. Based on the above understanding, we construct the following test when $s=1$. The sequence $k^*$ is claimed to be anomalous if
\begin{flalign}\label{eq:test_1_withoutref}
    k^*=\arg\max_{1\leq k\leq n}\mmd_u^2[Y_k,\overline Y_k].
\end{flalign}

The following proposition characterizes the condition under which the above test is consistent.
\begin{proposition}\label{thm:s1withoutref}
Consider the anomalous hypothesis testing model with one anomalous sequence, i.e., $s=1$. Suppose the test \eqref{eq:test_1_withoutref} applies a bounded kernel with $0\leq k(x,y)\leq K$ for any $(x,y)$.
Then, the probability of error is upper bounded as follows,
\begin{flalign}\label{pe1}
  P_e\leq \exp\Big(\log n-\frac{m(\mmd^2[p,q]-\xi)^2}{16K^2(1+ \Theta(\frac{1}{n}))}\Big),
\end{flalign}
where $\xi  $ is a constant which can be picked arbitrarily close to zero.
Furthermore, the test \eqref{eq:test_1_withoutref} is exponentially consistent if
\begin{flalign}\label{eq:suff}
m\geq\frac{16K^2(1+\eta)}{\mmd^4[p,q]}\log n,
\end{flalign}
where $\eta$ is any positive constant. 
\end{proposition}
\begin{proof}
See Appendix \ref{proof:s1_withoutref}.
\end{proof}

Proposition \ref{thm:s1withoutref} implies that for the scenario with one anomalous sequence, $\Omega(\log n)$ samples are sufficient to guarantee  consistent detection.



We next consider the case with $s\geq 1$. More specifically, we consider the case with $\frac{s}{n}\rightarrow \alpha$ as $n\rightarrow \infty$, where $0\leq \alpha< \frac{1}{2}$.  Although we focus on the case with $\alpha<\frac{1}{2}$, the case with $\alpha > \frac{1}{2}$ is similar, with the roles of $p$ and $q$ being exchanged. We first study the case with $s$  known. Our test is a natural generalization of the test \eqref{eq:test_1_withoutref} except now the test picks the sequences with the largest $s$ values of $\mmd_u^2[Y_k,\overline Y_k]$, which is given by
\begin{flalign}\label{eq:test_s_withoutref}
\hat{\cI}=&\{k:\mmd_u^2[Y_k,\overline Y_k] \text{ is among the $s$ largest}  \text{ values of $\mmd_u^2[Y_i,\overline Y_i]$ for } i=1,\ldots,n\}.
\end{flalign}

The following theorem characterizes the condition under which the above test is consistent.
\begin{theorem}\label{thm:swithoutref}
Consider the anomalous hypothesis testing model with $s$ anomalous sequences, where $\frac{s}{n}\rightarrow \alpha$ as $n\rightarrow \infty$ and $0\leq \alpha< \frac{1}{2}$. Assume the value of $s$ is known. Further assume that the test \eqref{eq:test_s_withoutref} applies a bounded kernel with $0\leq k(x,y)\leq K$ for any $(x,y)$.
Then the probability of error is upper bounded as follows,
\begin{flalign}\label{pe2}
  P_e\leq \exp\Big(\log((n-s)s)-\frac{m((1-2\alpha)\mmd^2[p,q]-\xi)^2}{16K^2(1+ \Theta(\frac{1}{n}))}\Big),
\end{flalign}
where $\xi$ is a constant which can be picked arbitrarily close to zero.
Furthermore, the test \eqref{eq:test_s_withoutref} is exponentially consistent for any $p$ and $q$ if
\begin{flalign}\label{eq:mnoref}
m\geq\frac{16K^2(1+\eta)}{(1-2\alpha)^2\mmd^4[p,q]}\log(s(n-s)),
\end{flalign}
where $\eta$ is any positive constant. 
\end{theorem}
\begin{proof}
See Appendix \ref{proof:s_withoutref}.
\end{proof}

We note that $\log((n-s)s)=\Theta (\log n)$, for $1\leq s< n$. Hence, Theorem \ref{thm:swithoutref} implies that even with $s$  anomalous sequence, the test \eqref{eq:test_s_withoutref} requires only $\Omega(\log n)$ samples in each data sequence in order to guarantee consistency of the test. Hence, the increase of  $s$   does not affect the order-level requirement on the sample size $m$.
We further note that Theorem \ref{thm:swithoutref} is also applicable to the case in which $\alpha > \frac{1}{2}$ simply with the roles of $p$ and $q$ exchanged.

\begin{remark}
For the case with $\frac{s}{n}\rightarrow 0$, as $n\rightarrow \infty$, we can also build a test with reduced computational complexity as follows. For each $Y_k$, instead of using $n-1$ sequences to build $\overline Y_k$ as in the test \eqref{eq:test_s_withoutref}, we take any $l$ sequences out of the remaining $n-1$ sequences to build a sequence $\widetilde Y_k$, such that $\frac{l}{n}\rightarrow 0$ and $\frac{s}{l}\rightarrow 0$ as $n\rightarrow \infty$. Such an $l$ exists for any $s$ and $n$ satisfying $\frac{s}{n}\rightarrow 0$ (e.g., $l=\sqrt{sn}$). It can be shown that using $\widetilde Y_k$ to replace $\overline Y_k$ in the test \eqref{eq:test_s_withoutref} still leads to consistent detection under the same condition given in Theorem \ref{thm:swithoutref}. Since $l$ is much smaller than $n$, computational complexity is substantially reduced.
\end{remark}

%
We note that Theorem \ref{thm:swithoutref} (which includes Proposition \ref{thm:s1withoutref} as a special case) characterizes the conditions to guarantee test consistency for a pair of fixed but unknown distributions $p$ and $q$. Hence, the condition \eqref{eq:mnoref} depends on the underlying distributions $p$ and $q$. In fact, such a condition further yields the following condition that guarantees the test to be universally consistent for arbitrary $p$ and $q$.
\begin{proposition}[Universal Consistency]\label{prop:achievesknown}
Consider the anomalous hypothesis testing problem, where $\frac{s}{n}\rightarrow \alpha$ as $n\rightarrow \infty$ and $0\leq \alpha< \frac{1}{2}$. Assume $s$ is known. Further assume that the test \eqref{eq:test_s_withoutref} applies a bounded kernel with $0\leq k(x,y)\leq K$ for any $(x,y)$. Then the test \eqref{eq:test_s_withoutref} is universally consistent for any arbitrary pair of $p$ and $q$, if
\begin{flalign}
  m=\omega(\log n).
\end{flalign}
\end{proposition}
\begin{proof}
This result follows from \eqref{eq:mnoref} and the facts that $\log((n-s)s)=\Theta (\log n)$ and $\mmd[p,q]$ is constant for any given $p$ and $q$.
\end{proof}


\subsection{Unknown $s$}
In this subsection, we consider the case, in which the value of $s$ is unknown. And we focus on the scenario that   $\frac{s}{n}\rightarrow 0$, as $n\rightarrow \infty$. This includes two cases: (1) $s$ is fixed and (2) $s\rightarrow \infty$ and $\frac{s}{n} \rightarrow 0$ as $n \rightarrow \infty$.
Without knowledge of $s$, the test in \eqref{eq:test_s_withoutref} is not applicable anymore, because it depends on the value of $s$.

In order to build a test now, we first observe that for each $k$, although $\overline Y_k$ contains mixed samples from $p$ and $q$, it is dominated by samples from $p$ due to the above assumption on $s$. Thus, for large enough $m$ and $n$, $\mmd_u^2[Y_k,\overline Y_k]$ should be close to zero if $Y_k$ is drawn from $p$, and should be far away enough from zero (in fact, close to $\mmd^2[p,q]$) if $Y_k$ is drawn from  $q$. Based on this understanding, we construct the following test:
\begin{flalign}\label{test:sunknown_nonref}
\widehat{\mathcal{I}}=\{k:\mmd_u^2[Y_k,\overline Y_k]>\delta_n\}
\end{flalign}
where $\delta_n \rightarrow 0$ and $\frac{s^2}{n^2 \delta_n} \rightarrow 0$ as $n \rightarrow \infty$. The reason for the condition $\frac{s^2}{n^2 \delta_n} \rightarrow 0$ is to guarantee that $\delta_n$ converges to $0$ more slowly than $\mmd_u^2[Y_k,\overline Y_k]$ with $Y_k$ drawn from $p$ so that as $n$ goes to infinity, $\delta_n$ asymptotically falls between $\mmd_u^2[Y_k,\overline Y_k]$ with $Y_k$ drawn from $p$ and $\mmd_u^2[Y_k,\overline Y_k]$ with $Y_k$ drawn from $q$.
We note that the scaling behavior of $s$ as $n$ increases needs to be known in order to pick $\delta_n$ for the test. This is reasonable to assume because mostly in practice the scale of anomalous data sequences can be estimated based on domain knowledge.

The following theorem characterizes the condition under which the test \eqref{test:sunknown_nonref} is consistent.
\begin{theorem}\label{thm:sunknownnonref}
Consider the anomalous hypothesis testing model with $s$ anomalous sequences, where $\frac{s}{n}\rightarrow 0$, as $n\rightarrow \infty$. Assume that $s$ is unknown in advance. Further assume that the test \eqref{test:sunknown_nonref} adopts a threshold $\delta_n$ such that $\delta_n\rightarrow 0$ and $\frac{s^2}{n^2\delta_n} \rightarrow 0$, as $n\rightarrow \infty$, and the test applies a bounded kernel with $0\leq k(x,y)\leq K$ for any $(x,y)$.
Then the probability of error is upper bounded as follows:
\begin{flalign}
  P_e\leq  &\exp\bigg( \log s  -\frac{m(\mmd^2[p,q]  -  \delta_n)^2}{16K^2(1+\Theta(\frac{1}{n}))}  \bigg) +\exp\bigg(\log(n-s)   -\frac{m(\delta_n-\mE\big[\mmd_u^2[Y_k,\overline Y_k]\big])^2}{16K^2(1+ \Theta(\frac{1}{n}))}  \bigg).
\end{flalign}
Furthermore, the test \eqref{test:sunknown_nonref} is consistent if
\begin{flalign}\label{eq:sunknown}
m\geq 16(1+\eta)K^2& \max \Big\{ \frac{\log (\max\{1,s\})}{(\mmd^2[p,q] -  \delta_n)^2} \;,  \frac{\log(n-s)}{(\delta_n-\mE\big[\mmd_u^2[Y,\overline Y]\big])^2} \Big\},
\end{flalign}
where $\eta$ is any positive constant. In the above equation, $\mE[\mmd_u^2[Y,\overline Y]]$ is a constant, where $Y$ is a sequence generated by $p$ and $\overline Y$ is a stack of $(n-1)$ sequences with $s$ sequences generated by $q$ and the remaining sequences generated by $p$.
\end{theorem}
\begin{proof}
See Appendix \ref{proof:sunknownnonref}.
\end{proof}

We note that Theorem \ref{thm:sunknownnonref} is also applicable to the case with $s=0$, i.e., the null hypothesis when there is no anomalous sequence. We further note that the test \eqref{test:sunknown_nonref} is not exponentially consistent.
In fact, when there is no null hypothesis (i.e., $s>1$), an exponentially consistent test can be built as follows. For each subsect $\mathcal S$ of ${1,\ldots,n}$, we compute $\mmd_u^2[Y_{\mathcal S},\overline Y_{\mathcal S}]$, and the test finds the set of indices corresponding to the largest average value. However, for such a test to be consistent, $m$ needs to scale linearly with $n$, which is not desirable.

Theorem \ref{thm:sunknownnonref} implies that $m$ should be in the order $\omega(\log n)$ to guarantee test consistency, because $\frac{s}{n}\rightarrow 0$ and $\delta_n \rightarrow 0$ as $n \rightarrow \infty$. Compared to the case with $s$ known (for which it is sufficient for $m$ to scale at the order $\Theta(\log n)$), the threshold on $m$ has order-level increase due to lack of the knowledge of $s$. Furthermore, the above understanding on the order-level condition on $m$ also yields the following sufficient condition for the test to be universally consistent.
\begin{proposition}[Universal Consistency]\label{prop:achievesunknown}
Consider the anomalous hypothesis testing problem, where $\frac{s}{n}\rightarrow 0$, as $n\rightarrow \infty$. We assume that $s$ is unknown in advance. Further assume that the test \eqref{test:sunknown_nonref} adopts a threshold $\delta_n$ such that $\delta_n\rightarrow 0$ and $\frac{s^2}{n^2\delta_n} \rightarrow 0$, as $n\rightarrow \infty$, and the test applies a bounded kernel with $0\leq k(x,y)\leq K$ for any $(x,y)$. Then the test \eqref{test:sunknown_nonref} is universally consistent for any arbitrary pair of $p$ and $q$, if
\begin{flalign}
  m=\omega(\log n).
\end{flalign}
\end{proposition}
Comparison between Proposition \ref{prop:achievesunknown} with Proposition \ref{prop:achievesknown} implies that the knowledge of $s$ does not affect the order-level sample complexity to guarantee a test to be universally consistent.



\subsection{Example with Sparse Anomalous Samples}

We study the example with  the anomalous distribution $q=(1-\epsilon_n)p+\epsilon_n \tq$ as we introduce in Section \ref{sec:model}. The following result characterizes the impact of sparsity level $\epsilon_n$ on the scaling behavior of $m$ to guarantee consistent detection.
\begin{corollary}\label{coro:mix_withoutref}
Consider the model with the typical distribution $p$ and the anomalous distribution $q=(1-\epsilon_n)p+\epsilon_n \tq$, where $0 < \epsilon_n \leq 1$. If $s$ is known, then the test \eqref{eq:test_s_withoutref} is consistent if
\begin{flalign}
m\geq\frac{16K^2(1+\eta)}{(1-2\alpha)^2\epsilon_n^4\mmd^4[p,\tq]}\log(s(n-s)),
\end{flalign}
where $\eta$ is any positive constant.

If $s$ is unknown, then
the test  \eqref{test:sunknown_nonref}  is consistent if
\begin{flalign}
m\geq 16(1+\eta)K^2& \max \Big\{ \frac{\log (\max\{1,s\})}{(\epsilon_n^2\mmd^2[p,\tq] -  \delta_n)^2} \;,  \frac{\log(n-s)}{(\delta_n-\mE\big[\mmd_u^2[Y,\overline Y]\big])^2} \Big\},
\end{flalign}
where $\eta$ is any positive constant, $\frac{s^2\epsilon_n^2}{n^2\delta_n}\rightarrow 0$ and $\frac{\delta_n}{\epsilon_n^2}\rightarrow 0$ as $n\rightarrow \infty$, $Y$ is a sequence generated by $p$, and $\overline Y$ is a stack of $(n-1)$ sequences with $s$ sequences generated by $\tq$ and the remaining sequences generated by $p$.
\end{corollary}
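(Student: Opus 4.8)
The plan is to derive Corollary~\ref{coro:mix_withoutref} as a direct specialization of Theorems~\ref{thm:swithoutref} and~\ref{thm:sunknownnonref} to the mixture model $q=(1-\epsilon_n)p+\epsilon_n\tq$, in complete analogy with the way Corollary~\ref{coro:mix_ref} is obtained from Theorems~\ref{thm:swithreference} and~\ref{thm:unknown_swithref}. The single substantive ingredient is the identity $\mmd^2[p,q]=\epsilon_n^2\,\mmd^2[p,\tq]$, already established in~\eqref{eq:coro:mix_ref} by expanding the three expectations that define $\mmd^2[p,q]$, writing each expectation taken under $q$ as the corresponding convex combination with weights $1-\epsilon_n$ and $\epsilon_n$ of expectations under $p$ and $\tq$, and collecting terms so that the $p$-only contributions combine into the coefficient $\epsilon_n^2$. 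Since both theorems are stated for an arbitrary pair $p\neq q$, and since their standing hypotheses ($\tfrac{s}{n}\to\alpha<\tfrac12$ for the known-$s$ test and $\tfrac{s}{n}\to0$ for the unknown-$s$ test) constrain only $s$ and $n$ and are insensitive to the particular choice of $q$, it suffices to substitute this identity into the respective sample-size thresholds.

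For the case with $s$ known, I would substitute $\mmd^4[p,q]=\epsilon_n^4\,\mmd^4[p,\tq]$ into the denominator of~\eqref{eq:mnoref}. The factor $(1-2\alpha)^2$ there records the contamination of the stacked sequence $\overline Y_k$ by anomalous sequences and depends on $\alpha$ alone, so it is carried through untouched; this immediately yields the claimed bound $m>\frac{16K^2(1+\eta)}{(1-2\alpha)^2\epsilon_n^4\mmd^4[p,\tq]}\log(s(n-s))$.

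For the case with $s$ unknown, I would substitute the same identity into the first (missed-detection) term of the threshold in Theorem~\ref{thm:sunknownnonref}, replacing $\mmd^2[p,q]$ by $\epsilon_n^2\mmd^2[p,\tq]$, while the second (false-alarm) term keeps its form with the constant $\mE[\mmd_u^2[Y,\overline Y]]$ now evaluated under the mixture model. I expect the main obstacle to be purely bookkeeping around this constant and the admissibility of $\delta_n$: one must re-examine $\mE[\mmd_u^2[Y,\overline Y]]$ for a normal $Y$ against a stack $\overline Y$ whose $s$ anomalous members are drawn from $q=(1-\epsilon_n)p+\epsilon_n\tq$ (so that this quantity is of order $(s/n)^2\epsilon_n^2\mmd^2[p,\tq]$ and still vanishes), and then verify that a threshold with $\delta_n\to0$ and $\tfrac{s^2}{n^2\delta_n}\to0$ can be placed between this vanishing false-alarm level and the fixed positive separation $\epsilon_n^2\mmd^2[p,\tq]=\mmd^2[p,q]$ controlling the first term. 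Such a $\delta_n$ exists precisely when $\epsilon_n$ stays bounded away from $0$, and the shrinking of the gap $\epsilon_n^2\mmd^2[p,\tq]$ as $\epsilon_n\to0$ is exactly what forces $m$ to scale faster, matching the interpretation recorded after Corollary~\ref{coro:mix_ref}. Beyond this placement check, no new concentration estimate is required: the concentration bounds for the bounded-kernel MMD estimator already invoked in the proofs of Theorems~\ref{thm:swithoutref} and~\ref{thm:sunknownnonref} apply verbatim once the MMD identity is inserted.
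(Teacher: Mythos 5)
Your proposal is correct and follows exactly the paper's own route: the paper proves this corollary in one line by substituting $\mmd^2[p,q]=\epsilon_n^2\mmd^2[p,\tq]$ (established in the proof of Corollary~\ref{coro:mix_ref}) into the thresholds of Theorems~\ref{thm:swithoutref} and~\ref{thm:sunknownnonref}. Your additional bookkeeping on $\mE[\mmd_u^2[Y,\overline Y]]$ and the placement of $\delta_n$ goes beyond what the paper writes and is sound, except that the parenthetical claim that a valid $\delta_n$ exists ``precisely when $\epsilon_n$ stays bounded away from $0$'' is too strong --- one only needs $\delta_n$ to sit between the vanishing false-alarm level, of order $(s/n)^2\epsilon_n^2\mmd^2[p,\tq]$, and the separation $\epsilon_n^2\mmd^2[p,\tq]$, which is possible for decaying $\epsilon_n$ as long as $\epsilon_n$ does not decay too fast relative to $s/n$.
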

\begin{proof}
The proof follows from Theorems \ref{thm:swithoutref} and \ref{thm:sunknownnonref} by substituting:
\begin{flalign}\label{eq:coro:mix_ref}
\mmd^2[p, q]&=\mathbb{E}_{x,x'}[k(x,x')]-2\mathbb{E}_{x,y}[k(x,y)]+\mathbb{E}_{y,y'}[k( y,y')]\nn\\
&=\mE_{x,x'}[k(x,x')]-2(1-\epsilon_n)\mE_{x,x'}[k(x,x')] -2\epsilon_n\mE_{x,\tilde y}[k(x,\tilde y)]\nn\\
&\quad+(1-\epsilon_n)^2\mE_{x,x'}[k(x,x')] +2\epsilon_n(1-\epsilon_n)\mE_{x,\tilde y}[k(x,\tilde y)]+\epsilon_n^2\mE_{\tilde y,\tilde y'}[k(\tilde y,\tilde y')]\nn\\
&=\epsilon_n^2\mmd^2[p,\tilde q],
\end{flalign}
where $x$ and $x'$ are independent but have the same distribution $p$, $y$ and $y'$ are independent but have the same distribution $q$, and $\tilde y$ and $\tilde y'$ are independent but have the same distribution $\tilde q$.
\end{proof}

Corollary \ref{coro:mix_withoutref} implies that if $\epsilon_n$ is a constant, then the scaling behavior of $m$ needed for consistent detection does not change. However, if $\epsilon_n\rightarrow 0$ as $n \rightarrow \infty$, i.e., anomalous sequences contain more sparse anomalous samples, then $m$ needs to scale faster with $n$ in order to guarantee consistent detection. This is reasonable because the sample size $m$ should have a higher order to cancel out the impact of the increasingly sparse anomalous samples in each anomalous sequence. Corollary \ref{coro:mix_withoutref} explicitly captures such tradeoff between the sample size $m$ and the sparsity level $\epsilon_n$ of anomalous samples in addition to $n$ and $s$.

\section{Necessary Condition and Optimality}\label{sec:converse}

In Section \ref{sec:withoutreference}, we characterize sufficient conditions on the sample size $m$ under which the MMD-based test is guaranteed to be consistent for any distribution pair $p$ and $q$. In this section, we  characterize conditions under which no test is universally consistent for arbitrary $p$ and $q$. We first study the case with $s=1$ for which we develop our key idea of the proof. We then generalize our study to the case with $s \geq 1$.
\begin{proposition}\label{thm:conv}
  Consider the anomalous hypothesis testing problem with one anomalous sequence. If the sample size $m$  satisfies
  \begin{flalign}\label{eq:necessary}
    m=O(\log n),
  \end{flalign}
  then there exists no test that is universally consistent for any arbitrary distribution pair $p$ and $q$.
\end{proposition}
\begin{proof}
See Appendix \ref{sec:proofcon}. The idea of the proof is to show that for a certain distribution pair $p$ and $q$, even the optimal parametric test (with known $p$ and $q$) is not consistent under the condition given in the theorem. This thus implies that under the same condition, no nonparametric test is universally consistent for arbitrary $p$ and $q$.
\end{proof}

We now generalize our result to the case with $s \ge 1$, and provide the following proposition.
\begin{proposition}\label{thm:convs}
  Consider the anomalous hypothesis testing problem with $s$ anomalous sequences. If the sample size $m$ satisfies
  \begin{flalign}\label{eq:convs}
    m= O\left(\frac{\log\frac{n}{s}}{s}\right),
  \end{flalign}
then there exists no test that is universally consistent for arbitrary  distribution pair $p$ and $q$.
\end{proposition}
\begin{proof}
It can be shown that the probability of error of this problem is lower bounded by a special scenario, in which anomalous sequences can only be a group of $s$ sequences with consecutive indices, i.e., one of the following possibilities: the $(is+1)$-th to $(i+1)s$-th sequences, for $i=0,\ldots, \lfloor \frac{n}{s}\rfloor-1$. Hence, there are $\lfloor \frac{n}{s}\rfloor$ candidates. Such a specific scenario can be viewed as the problem of detecting one anomalous sequence with length $ms$ out of $\lfloor \frac{n}{s}\rfloor$ sequences. The proposition then follows from arguments similar to those used to prove Proposition \ref{thm:conv}.
\end{proof}

The sufficient and necessary conditions on sample complexity that we derive so far establish  the following performance optimality for the MMD-based test.
\begin{theorem}[Optimality]
Consider the nonparametric anomalous hypothesis testing problem with  $s \geq 1$. For $s$ being known and unknown, the MMD-based test \eqref{eq:test_s_withoutref} (under the conditions in Propositions \ref{prop:achievesknown}) and the test \eqref{test:sunknown_nonref}  (under the conditions in Proposition \ref{prop:achievesunknown}) are respectively order-level optimal in sample complexity required to guarantee universal  consistency for arbitrary $p$ and $q$.
\end{theorem}
\begin{proof}
The proof follows by comparing Propositions \ref{prop:achievesknown} and \ref{prop:achievesunknown} with Proposition \ref{thm:convs} and observing the fact that $m= O(\log n)$ in Proposition \ref{thm:convs} for finite $s$.
\end{proof}

\section{Numerical Results}\label{sec:simulations}

In this section, we provide numerical results to demonstrate our theoretical assertions, and compare our MMD-based tests with a number of other tests. We also apply our test to a real data set.

We first demonstrate our theorem on sample complexity. We note that although the following experiment is performed for chosen distributions $p$ and $q$, our tests are nonparametric and do not exploit the information about $p$ and $q$. We choose the distribution $p$ to be Gaussian with mean zero and variance one, i.e., $\cN(0,1)$, and choose the anomalous distribution $q$ to be Laplace distribution with mean one and variance one. We use the Gaussian kernel $k(x,x')=\exp(-\frac{|x-x'|^2}{2\sigma^2})$ with $\sigma=1$.
We set $s=1$. We run the test for cases with   $n=40$ and $100$, respectively. In Figure~\ref{fig:non_ref_1}, we plot how the probability of error changes with $m$. For illustrational convenience, we normalize $m$ by $\log n$, i.e., the horizontal axis represents $\frac{m}{\log n}$. It is clear from the figure that when $\frac{m}{\log n}$ is above a certain threshold, the probability of error converges to zero, which is consistent with our theoretical results. Furthermore, for different values of $n$, the two curves drop to zero almost at the same threshold. This observation confirms Proposition \ref{thm:s1withoutref}, which states that the threshold on $\frac{m}{\log n}$ depends only on the bound $K$ of the kernel and MMD of the two distributions. Both quantities are constant for the two values of $n$.

\begin{figure}[htbp]
\centering
\includegraphics[width=4in]{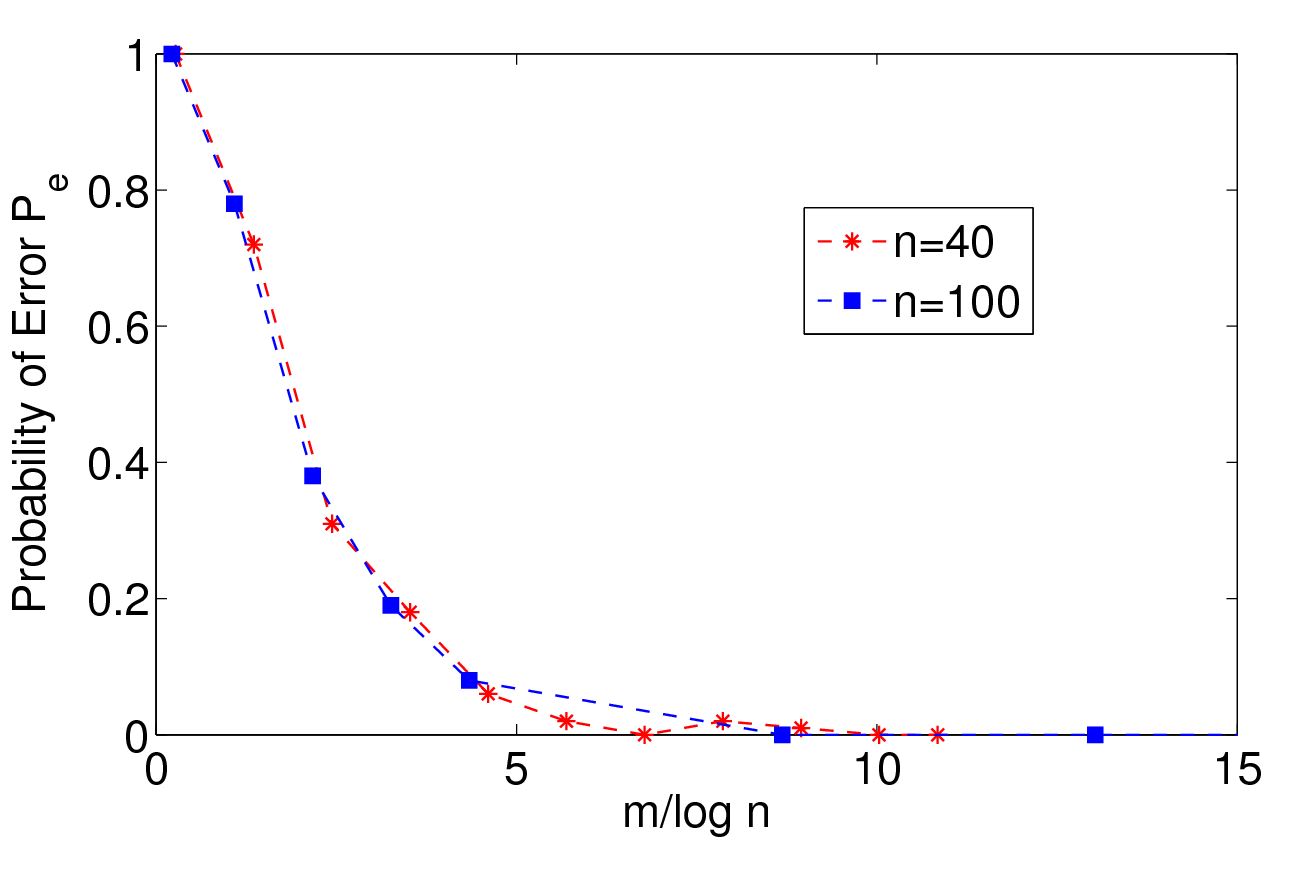}
\caption{The performance of the MMD-based test.}
\label{fig:non_ref_1}
\end{figure}

We next compare the MMD-based test with the divergence-based generalized likelihood test developed in \cite{Li2013}. Since the test in \cite{Li2013} is applicable only when the distributions $p$ and $q$ are discrete and have finite alphabets, we set the distributions $p$ and $q$ to be binary with $p$ having probability 0.3 to take ``0" (and probability 0.7 to take ``1"), and $q$ having probability 0.7 to take ``0" (and probability 0.3 to take ``1"). We let $s=1$ and assume that $s$ is known. We let $n=50$. In Figure~\ref{fig:binary_mmd}, we plot the probability of error as a function of the sample size $m$. It can be seen that the MMD-based test outperforms the divergence-based generalized likelihood test. We note that it has been shown in \cite{Li2013} that the generalized likelihood test has optimal convergence rate in the limiting case when $n$ is infinite. Our numerical comparison, on the other hand, demonstrates that the MMD-based test performs as well as or even better than the generalized likelihood test for moderate $n$.

\begin{figure}[htbp]
\centering
\includegraphics[width=4in]{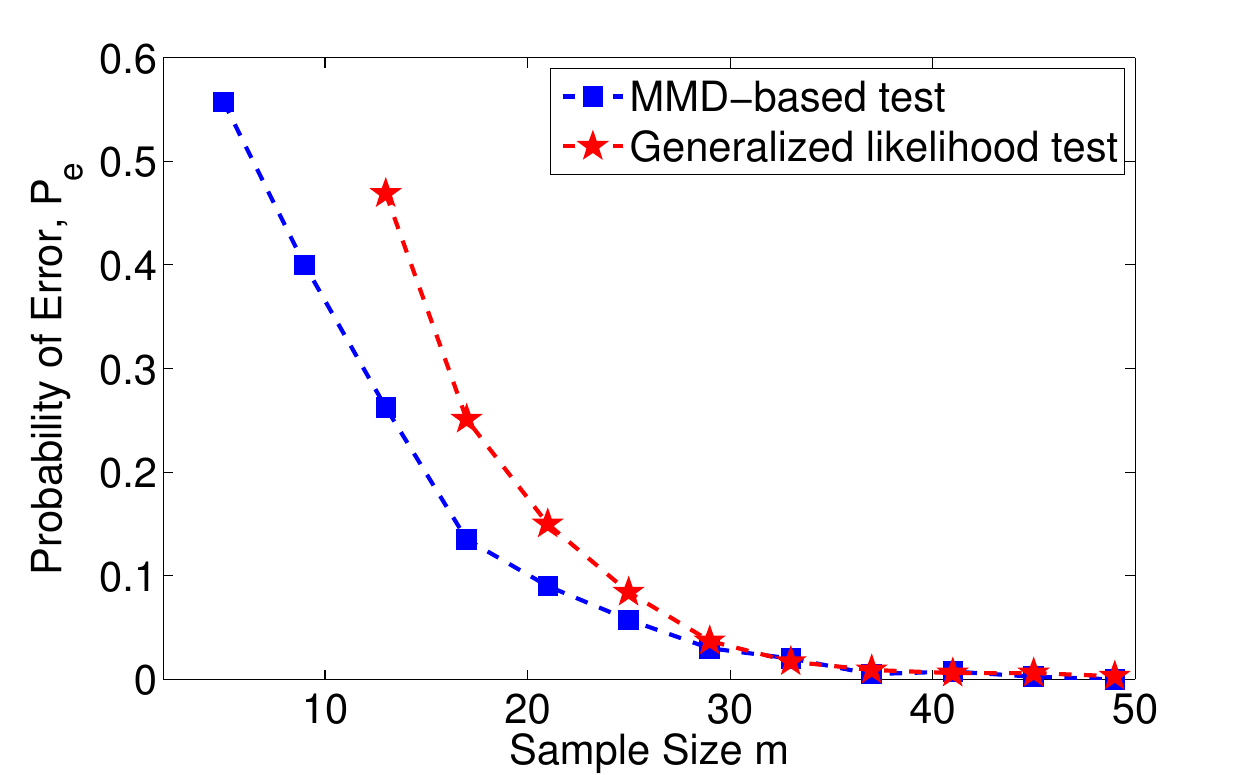}
\caption{Comparison of the MMD-based test with divergence-based generalized likelihood test.}
\label{fig:binary_mmd}
\end{figure}

We finally compare the performance of the MMD-based test with a few other competitive tests on a real data set. We choose the collection of daily maximum temperature of Syracuse (New York, USA) in July from 1993 to 2012 as the typical data sequences, and the collection of daily maximum temperature of Makapulapai (Hawaii, USA) in May from 1993 to 2012 as anomalous sequences. Here, each data sequence contains daily maximum temperatures of a certain day across twenty years from 1993 to 2012. In our experiment, the data set contains 32 sequences in total, including one temperature sequence of Hawaii and 31 sequences of Syracuse. The probability of error is averaged over all cases with each using one sequence of Hawaii as the anomalous sequence. Although it seems easy to detect the sequence of Hawaii out of the sequences of Syracuse, the temperatures we compare for the two places are in May for Hawaii and July for Syracuse, during which the two places have approximately the same mean in temperature. In this way, it may not be easy to detect the anomalous sequence (in fact, some tests do not perform well as shown in Figure~\ref{fig:realdata}).

\begin{figure}[htb]
\centering
\includegraphics[width=4in]{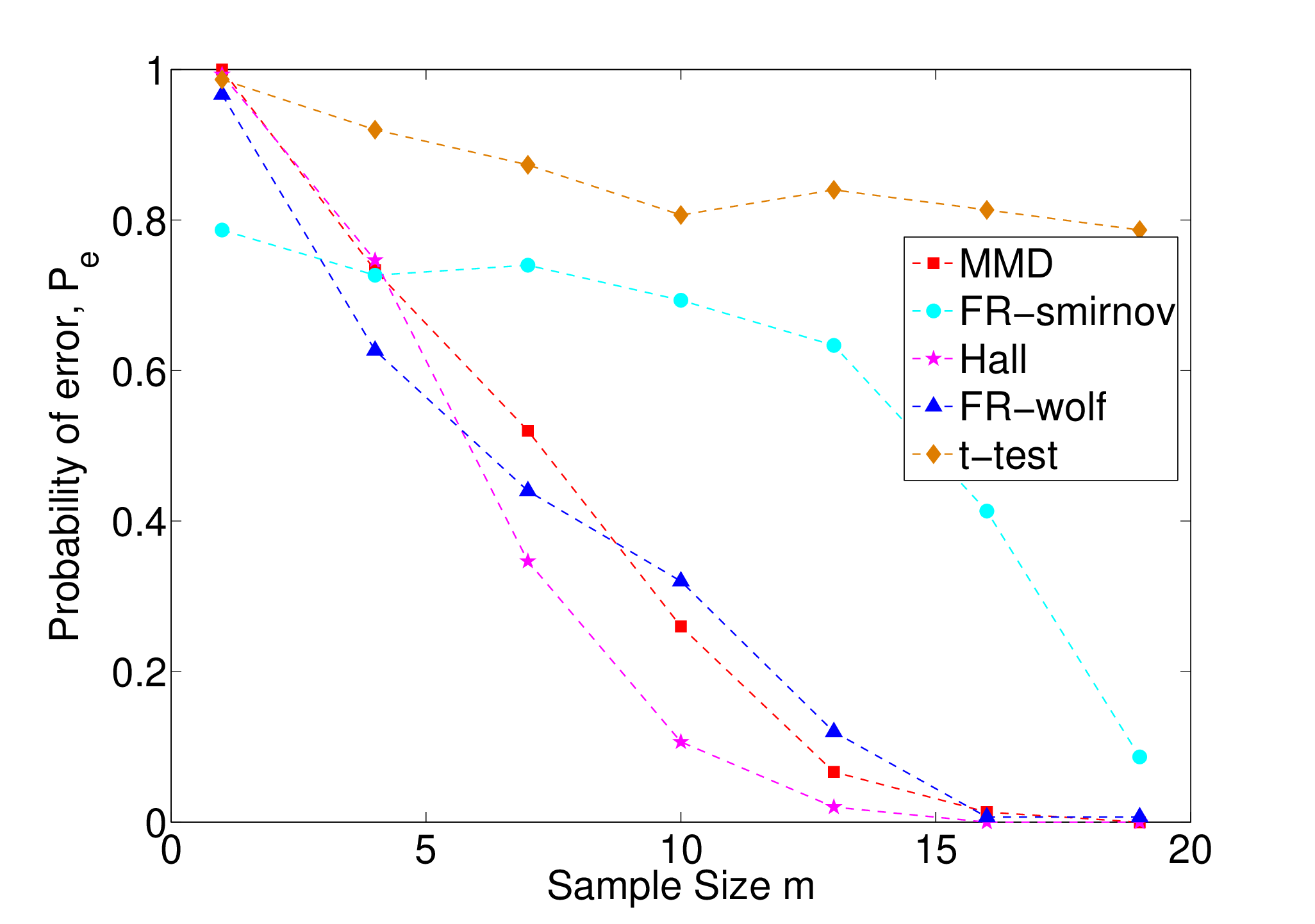}
  \caption{Comparison of the MMD-based test with four other tests on a real data set.}
  \label{fig:realdata}
\end{figure}%


We first compare the performance of the MMD-based test with t-test, FR-Wolf test, FR-Smirnov test, and Hall test on the above data set. For the MMD-based test, we use the Gaussian kernel with $\sigma=1$. In Figure~\ref{fig:realdata}, we plot the probability of error as a function of the length of sequence $m$ for all tests. It can be seen that the MMD-based test, Hall test, and FR-wolf test have the best performances, and all of the three tests are consistent with the probability of error converging to zero as $m$ goes to infinity. Furthermore, comparing to Hall and FR-wolf tests, the MMD-based test has the lowest computational complexity.

\begin{figure}[!htbp]
\centering
\includegraphics[width=4in]{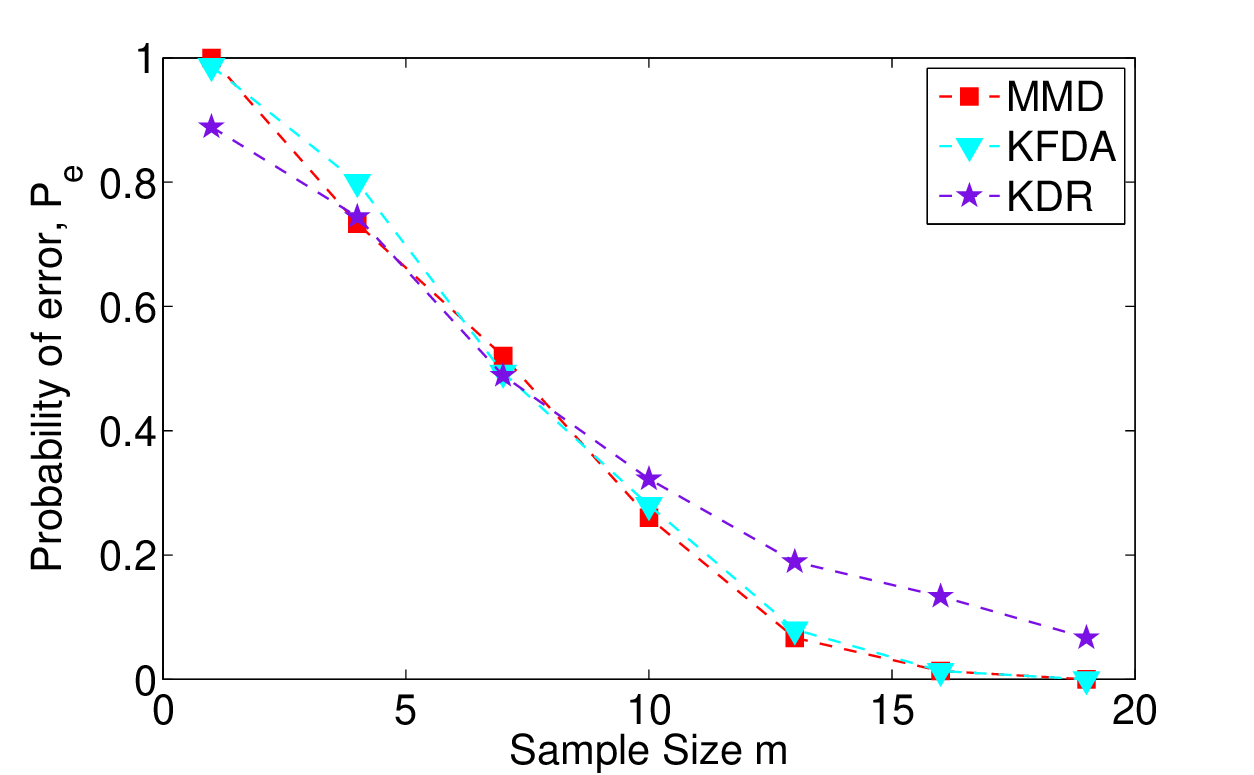}
  \caption{Comparison of the MMD-based test with two other kernel-based tests on a real data set.}
  \label{fig:realdata_k}
\end{figure}

We further compare the performance of MMD-based test with the kernel-based tests KFDA and KDR for the same data set. For all three tests, we use Gaussian kernel with $\sigma=1$. In Figure~\ref{fig:realdata_k}, we plot the probability of error as a function of the length of sequence for all tests. It can be seen that all tests are consistent with the probability of error converging to zero as $m$ increases, and the MMD-based test has the best performance among the three tests.

%

\section{Conclusion}\label{sec:conclusion}

In this paper, we have investigated a nonparametric anomalous hypothesis testing problem, in which typical and anomalous data sequences contain i.i.d.\ samples drawn from different distributions $p$ and $q$, respectively. We have built MMD-based distribution-free tests to detect anomalous sequences. We have characterized the scaling behavior of the sample size $m$ as the total number $n$ of sequences goes to infinity in order to guarantee consistency of the developed tests. We have further characterized the conditions under which no test  is universally consistent for arbitrary $p$ and $q$, and thus established that our proposed tests are order-level optimal. Our study of this problem demonstrates a useful application of the mean embedding of distributions and MMD, and we believe that such an approach can be applied to solving various other nonparametric problems.


\vspace{0.5in}
\appendix

\noindent {\Large \textbf{Appendix}}



\section{Proof of Proposition \ref{thm:s1withoutref}}\label{proof:s1_withoutref}
We first introduce the McDiarmid's inequality which is useful in bounding the probability of error in our proof.
\begin{lemma}[McDiarmid's Inequality]\label{mcdiarmid}
Let $f:\mathcal{X}^m\rightarrow \mathbb{R}$ be a function such that for all $i\in\{1,\ldots,m\}$, there exist $c_i<\infty$ for which
\begin{equation}
\underset{X\in\mathcal{X}^m, \tilde{x}\in \mathcal X}{sup}|f(x_1,\ldots,x_m)-f(x_1,\ldots x_{i-1},\tilde x,x_{i+1},\ldots,x_m)|\leq c_i.
\end{equation}
Then for all probability measure $p$ and every $\epsilon>0$,
\begin{equation}
P_X\bigg(f(X)-E_X(f(X))>\epsilon\bigg)<\exp\left(-\frac{2\epsilon^2}{\sum_{i=1}^mc_i^2}\right),
\end{equation}
where $X$ denotes $(x_1,\ldots,x_m)$, $E_X$ denotes the expectation over the $m$ random variables $x_i\thicksim p$, and $P_X$ denotes the probability over these $m$ variables.
\end{lemma}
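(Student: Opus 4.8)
The plan is to prove this by the standard Doob-martingale (bounded-differences) argument, which combines a telescoping martingale decomposition with Hoeffding's lemma and a Chernoff bound. First I would construct the Doob martingale associated with $f$: for $0\le i\le m$ set $V_i := \mE_X[f(X)\mid x_1,\ldots,x_i]$, so that $V_0 = \mE_X[f(X)]$ and $V_m = f(X)$. Writing $D_i := V_i - V_{i-1}$ for the martingale differences, one has the telescoping identity $f(X)-\mE_X[f(X)] = \sum_{i=1}^m D_i$, and by the tower property $\mE[D_i\mid x_1,\ldots,x_{i-1}] = 0$ for each $i$.

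The crucial structural step is to show that, conditioned on $x_1,\ldots,x_{i-1}$, the increment $D_i$ is almost surely confined to an interval whose width is at most $c_i$. Explicitly, define
\[
L_i := \inf_{u\in\cX}\mE[f(X)\mid x_1,\ldots,x_{i-1},x_i=u] - V_{i-1},\qquad U_i := \sup_{u\in\cX}\mE[f(X)\mid x_1,\ldots,x_{i-1},x_i=u] - V_{i-1}.
\]
Then $D_i\in[L_i,U_i]$, and the bounded-differences hypothesis forces $U_i - L_i \le c_i$: replacing the $i$-th coordinate changes $f$ by at most $c_i$ pointwise, and since the remaining coordinates are independent of $x_i$, the same $c_i$ bound is inherited after averaging over them. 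This inheritance of the range bound under conditional expectation, which relies essentially on the independence of the samples $x_i\sim p$, is the main obstacle and must be argued carefully.

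With the range bound in hand, I would invoke Hoeffding's lemma conditionally: for any zero-mean random variable $Z$ supported on an interval of width $w$, $\mE[e^{\lambda Z}]\le e^{\lambda^2 w^2/8}$. Applied to $D_i$ given $x_1,\ldots,x_{i-1}$, this yields $\mE[e^{\lambda D_i}\mid x_1,\ldots,x_{i-1}]\le e^{\lambda^2 c_i^2/8}$ for every $\lambda>0$. Peeling off the differences one at a time via the tower property then gives
\[
\mE_X\!\left[e^{\lambda(f(X)-\mE_X[f(X)])}\right] = \mE_X\!\left[e^{\lambda\sum_{i=1}^m D_i}\right] \le \prod_{i=1}^m e^{\lambda^2 c_i^2/8} = \exp\!\left(\tfrac{\lambda^2}{8}\sum_{i=1}^m c_i^2\right).
\]

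Finally, the Chernoff bound $P_X(f(X)-\mE_X[f(X)]>\epsilon)\le e^{-\lambda\epsilon}\,\mE_X[e^{\lambda(f(X)-\mE_X[f(X)])}]$ combined with the preceding estimate gives, for every $\lambda>0$, a bound of the form $\exp(-\lambda\epsilon + \tfrac{\lambda^2}{8}\sum_{i=1}^m c_i^2)$. Optimizing the exponent over $\lambda$ by setting $\lambda = 4\epsilon/\sum_{i=1}^m c_i^2$ produces $-2\epsilon^2/\sum_{i=1}^m c_i^2$, which is exactly the claimed exponent; the strict inequality follows from the strictness in Hoeffding's lemma away from degenerate cases. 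I would state Hoeffding's lemma as a short auxiliary fact (proved by convexity of $e^{\lambda z}$ on $[a,b]$ together with a second-order Taylor bound on the log-moment-generating function) so that the argument is self-contained.
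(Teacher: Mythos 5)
Your proposal is correct, but there is nothing in the paper to compare it against: the paper states McDiarmid's inequality (Lemma~1) as a known classical result and uses it as a black box in Appendices A through F; no proof of the lemma is given anywhere in the paper. What you have written is the standard bounded-differences proof --- Doob martingale $V_i=\mE[f(X)\mid x_1,\ldots,x_i]$, conditional range bound $U_i-L_i\le c_i$, conditional Hoeffding lemma, tower-property peeling, and a Chernoff bound optimized at $\lambda=4\epsilon/\sum_{i=1}^m c_i^2$ --- and the steps check out, including your correct identification of where independence enters: given $x_1,\ldots,x_{i-1}$ and two values $u,u'$ of the $i$-th coordinate, the conditional law of $(x_{i+1},\ldots,x_m)$ is the same product law in both cases, so the pointwise bound $|f(\ldots,u,\ldots)-f(\ldots,u',\ldots)|\le c_i$ survives the averaging and yields $U_i-L_i\le c_i$.

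Two minor caveats. First, when $\cX$ is uncountable, $L_i$ and $U_i$ defined as pointwise infima/suprema need not be measurable; the usual fix is to work with essential bounds, or simply to observe that only the width estimate $U_i-L_i\le c_i$ is ever used, so this is cosmetic. Second, your closing claim that the strict inequality of the lemma ``follows from the strictness in Hoeffding's lemma away from degenerate cases'' is hand-waving: the argument as constructed delivers the tail bound with $\le$, which is the standard form of McDiarmid's inequality and entirely sufficient for every application in the paper (Theorems~\ref{thm:s1withreference}--\ref{thm:sunknownnonref} only use the exponential decay of the bound), so you should either state the conclusion with $\le$ or supply an actual strictness argument rather than assert it.
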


In order to analyze the probability of error for the test \eqref{eq:test_1_withoutref}, without loss of generality, we assume that the first sequence is the anomalous sequence generated by the anomalous distribution $q$. Hence,
\begin{flalign}
P_e&=P(k^*\neq 1)\nn\\
&=P\bigg(\exists k\neq 1: \mmd_u^2[Y_k,\overline Y_k]>\mmd_u^2[Y_1,\overline Y_1]\bigg)\nn\\
&\leq \sum_{k=2}^nP\bigg(\mmd_u^2[Y_k,\overline Y_k]>\mmd_u^2[Y_1,\overline Y_1]\bigg).
\end{flalign}
For notational convenience, we stack $Y_1,\ldots, Y_n$ into a $nm$ dimensional row vector $Y=\{y_i,1\leq i\leq nm\}$, where $Y_k=\{y_{(k-1)m+1},\ldots,y_{km}\}$. And we define  $n'=(n-1)m$.
We then have,
\begin{flalign}\label{mmd:y1y1bar}
&\mmd_u^2[Y_1,\overline Y_1]=\frac{1}{m(m-1)}\sum_{\substack{i,j=1\\i\neq j}}^{m,m}k(y_i,y_j)+\frac{1}{n'(n'-1)}\sum_{\substack{i,j=m+1\\i\neq j}}^{nm}k(y_i,y_j)-\frac{2}{mn'}\sum_{\substack{i=1\\j=m+1}}^{m,nm}k(y_i,y_j).
\end{flalign}

For $2\leq k\leq n$, we have,
\begin{small}
\begin{flalign}\label{mmd:ykykbar}
\mmd_u^2&[Y_k,\overline Y_k]=\frac{1}{m(m-1)}\sum_{\substack{i,j=(k-1)m+1\\ i\neq j}}^{km,km}k(y_i,y_j)+\frac{1}{n'(n'-1)}\bigg(\sum_{\substack{i,j=1\\i\neq j}}^{m,m}k(y_i,y_j)+2\sum_{\substack{i=1\\j=m+1}}^{m,(k-1)m}k(y_i,y_j)\nn\\
& + 2\sum_{\substack{i=1\\j=km+1}}^{m,nm}k(y_i,y_j) +\sum_{\substack{i,j=m+1\\i\neq j}}^{(k-1)m,(k-1)m}k(y_i,y_j) +\sum_{\substack{i,j=km+1\\i\neq j}}^{nm,nm}k(y_i,y_j) +2\sum_{\substack{i=m+1\\j={km+1}}}^{(k-1)m,nm}k(y_i,y_j)\bigg)\nn\\
  &-\frac{2}{mn'}\bigg(\sum_{\substack{i=1\\j=(k-1)m+1}}^{m,km}k(y_i,y_j)+\sum_{\substack{i=m+1\\j=(k-1)m+1}}^{(k-1)m,km}k(y_i,y_j)+\sum_{\substack{i=(k-1)m+1\\j=km+1}}^{km,nm}k(y_i,y_j)\bigg).
\end{flalign}
\end{small}

We define
\[\Delta_k=\mmd_u^2[Y_k,\overline Y_k]-\mmd_u^2[Y_1,\overline Y_1].\]
It can be shown that,
\[\mE [\mmd_u^2[Y_1,\overline Y_1]]=\mmd^2[p,q],\]
and
\begin{flalign}
\mE[\mmd_u^2[Y_k,\overline Y_k]]&=\mE_{x,x'}k(x,x')+\frac{1}{(n-1)m((n-1)m-1)}\bigg(m(m-1)\mE_{y,y'}k(y,y')\nn\\
&+2m^2(n-2)\mE_{x,y}k(x,y)+((n-2)m-1)(n-2)m\mE_{x,x'}k(x,x')\bigg)\nn\\
&-\frac{2}{(n-1)m^2}\bigg(m^2\mE_{x,y}k(x,y)+(n-2)m^2\mE_{x,x'}k(x,x')\bigg)\nn\\
&\rightarrow 0,\text{ as }n\rightarrow \infty,
\end{flalign}
where $x$ and $x'$ are independent but have the same distribution $p$, $y$ and $y'$ are independent but have the same distribution $q$.
Hence, there exists a constant $\xi$ that satisfies
\begin{flalign}\label{eq:ieq1}
\mE[\mmd_u^2[Y_k,\overline Y_k]]<\xi<\mmd^2[p,q],
\end{flalign}
for large enough $n$. Here, $\xi$ can be arbitrarily close to zero as $n\rightarrow \infty$.

We next divide the entries in $\{y_1,\ldots,y_{nm}\}$ into three groups: $Y_1=\{y_1,\ldots,y_m\}$, $Y_k=\{y_{(k-1)m+1}\ldots,y_{km}\}$, and $\widehat{Y_k}$ that contains the remaining entries. We define $Y_{-a}$ as $Y$ with the $a$-th component $y_a$ being removed.

For $1\leq a\leq m$, $y_a$ affects $\Delta_k$ through the following terms
\begin{flalign}
&\frac{1}{n'(n'-1)}\bigg(2\sum_{\substack{j=1\\j\neq a}}^{m}k(y_a,y_j)+2\sum_{j=m+1}^{(k-1)m}k(y_a,y_j)+2\sum_{j=km+1}^{nm}k(y_a,y_j)\bigg)\nn\\
&-\frac{2}{mn'}\sum_{j=(k-1)m+1}^{km}k(y_a,y_j)-\frac{2}{m(m-1)}\sum_{\substack{j=1\\k\neq a}}^mk(y_a,y_j)+\frac{2}{mn'}\sum_{j=m+1}^{nm}k(y_a,y_j).
\end{flalign}
Hence, for $1\leq a\leq m$, we have
\begin{flalign}\label{eq:thm3prfp1}
|\Delta_k\big(Y_{-a},y_a\big)-\Delta_k\big(Y_{-a},y'_a\big)|\leq \frac{4K}{m}\bigg(1+\Theta\bigg(\frac{1}{n}\bigg)\bigg).
\end{flalign}

For $(k-1)m+1\leq a\leq km$, $y_a$ affects $\Delta_k$ through
\begin{flalign}
&\frac{2}{m(m-1)}\sum_{\substack{j=(k-1)m+1\\j\neq a}}^{km}k(y_a,y_j)-\frac{2}{mn'}\bigg(\sum_{i=1}^mk(y_i,y_a)+\sum_{i=m+1}^{(k-1)m}k(y_i,y_a)+\sum_{j=km+1}^{nm}k(y_a,y_j)\bigg)\nn\\
&-\frac{2}{n'(n'-1)}\sum_{\substack{j=m+1\\j\neq a}}^{nm}k(y_a,y_j)+\frac{2}{mn'}\sum_{i=1}^mk(y_a,y_i).
\end{flalign}
Hence, for $(k-1)m+1\leq a\leq km$, we have
\begin{flalign}\label{eq:thm3prfp2}
|\Delta_k\big(Y_{-a},y_a\big)-\Delta_k\big(Y_{-a},y'_a\big)|\leq \frac{4K}{m}\bigg(1+ \Theta  \bigg(\frac{1}{n}\bigg)\bigg).
\end{flalign}

For $m+1\leq a \leq (k-1)m$ and $km+1\leq a\leq nm$, $y_a$ affects $\Delta_k$ through
\begin{flalign}
&\frac{2}{n'(n'-1)}\bigg(\sum_{i=1}^mk(y_i,y_a)+\sum_{\substack{i=m+1\\i\neq a}}^{(k-1)m}k(y_i,y_a)+\sum_{j=km+1}^{nm}k(y_a,y_j)\bigg)-\frac{2}{mn'}\sum_{j=(k-1)m+1}^{km}k(y_a,y_j)\nn\\
&-\frac{2}{n'(n'-1)}\sum_{\substack{j=m+1\\j\neq a}}^{nm}k(y_a,y_j)+\frac{2}{mn'}\sum_{i=(k-1)m+1}^{km}k(y_i,y_a).
\end{flalign}
Hence, for $m+1\leq a \leq (k-1)m$ or $km+1\leq a\leq nm$, we have
\begin{flalign}\label{eq:thm3prfp3}
|\Delta_k\big(Y_{-a},y_a\big)-\Delta_k\big(Y_{-a},y'_a\big)|\leq \frac{1}{m} \Theta\bigg(\frac{1}{n}\bigg).
\end{flalign}
We further derive the following probability,
\begin{small}
\begin{flalign}
&P\bigg(\mmd_u^2[Y_k,\overline Y_k]>\mmd_u^2[Y_1,\overline Y_1]\bigg)\nn\\
&=P\bigg(\mmd_u^2[Y_k,\overline Y_k]-\mmd_u^2[Y_1,\overline Y_1]+\mmd^2[p,q]>\mmd^2[p,q]\bigg)\nn\\
&\overset{(a)}{\leq} P\bigg(\mmd_u^2[Y_k,\overline Y_k]-\mmd_u^2[Y_1,\overline Y_1]+\mmd^2[p,q]-\mE[\mmd_u^2[Y_k,\overline Y_k]]>\mmd^2[p,q]-\xi\bigg),
\end{flalign}
\end{small}
where (a) follows from \eqref{eq:ieq1}.

Combining \eqref{eq:thm3prfp1}, \eqref{eq:thm3prfp2}, \eqref{eq:thm3prfp3}, and applying McDiarmid's inequality, we have,
\begin{flalign}
P\bigg(&\mmd_u^2[Y_k,\overline Y_k]>\mmd_u^2[Y_1,\overline Y_1]\bigg)\nn\\
&\leq \exp\bigg(-\frac{2(\mmd^2[p,q]-\xi)^2}{2m\frac{16K^2}{m^2}(1+ \Theta(\frac{1}{n}))+\frac{1}{m} \Theta(\frac{1}{n})}\bigg)\nn\\
&=\exp\bigg(-\frac{m(\mmd^2[p,q]-\xi)^2}{16K^2(1+ \Theta(\frac{1}{n}))}\bigg)
\end{flalign}
Hence,
\begin{flalign}
P_e\leq \exp\bigg(\log n-\frac{m(\mmd^2[p,q]-\xi)^2}{16K^2(1+ \Theta(\frac{1}{n}))}\bigg).
\end{flalign}
Since $\xi$ can be picked arbitrarily close to zero, we conclude that if
\begin{flalign}
m\geq\frac{16K^2(1+\eta)}{\mmd^4[p,q]}\log n,
\end{flalign}
where $\eta$ is any positive constant, then $P_e\rightarrow 0$ as $n\rightarrow \infty$. It is also clear that if the above condition is satisfied, $P_e$ converges to zero exponentially fast with respect to $m$. This completes the proof.

\section{Proof of Theorem \ref{thm:swithoutref}}\label{proof:s_withoutref}
We analyze the performance of the test \eqref{eq:test_s_withoutref}. Without loss of generality, we assume that the first $s$ sequences are anomalous and are generated from distribution $q$. Hence, the probability of error can be bounded as,
\begin{flalign}
P_e=&P\bigg(\exists k>s:\mmd_u^2[Y_k,\overline Y_k] > \min_{1\leq l\leq s}\mmd_u^2[Y_l,\overline Y_l]\bigg)\nn\\
\leq &\sum_{k=s+1}^n\sum_{l=1}^sP\bigg(\mmd_u^2[Y_k,\overline Y_k]>\mmd_u^2[Y_l,\overline Y_l]\bigg).
\end{flalign}

Using the fact that $\frac{s}{n}\rightarrow \alpha$, where $0\leq \alpha<\frac{1}{2}$, and using \eqref{mmd:y1y1bar} and \eqref{mmd:ykykbar}, we can show that
\begin{flalign}
\mE\big[\mmd_u^2[Y_l,\overline Y_l]\big]\rightarrow (1-\alpha)^2\mmd^2[p,q],
\end{flalign}
as $n\rightarrow \infty$ for $1\leq l\leq s$, and
\begin{flalign}
\mE\big[\mmd_u^2[Y_k,\overline Y_k]\big]\rightarrow \alpha^2\mmd^2[p,q],
\end{flalign}
as $n\rightarrow \infty$ for $s+1\leq k\leq n$. Hence, there exists a constant $\xi$ such that
\[0<\xi<(1-\alpha)^2\mmd^2[p,q]-\alpha^2\mmd^2[p,q]\]
and
\begin{flalign}
&\mE\big[\mmd_u^2[Y_k,\overline Y_k]-\mmd_u^2[Y_l,\overline Y_l]\big]<\alpha^2\mmd^2[p,q]-(1-\alpha)^2\mmd^2[p,q]+\xi,
\end{flalign}
for large enough $n$.

Therefore, we obtain,
\begin{flalign}
P\bigg(&\mmd_u^2[Y_k,\overline Y_k]-\mmd_u^2[Y_l,\overline Y_l]>0\bigg)\nn\\
=&P\bigg(\mmd_u^2[Y_k,\overline Y_k]-\mmd_u^2[Y_l,\overline Y_l]-\mE\big[\mmd_u^2[Y_k,\overline Y_k]-\mmd_u^2[Y_l,\overline Y_l]\big]\nn\\
&\quad\quad>-\mE\big[\mmd_u^2[Y_k,\overline Y_k]-\mmd_u^2[Y_l,\overline Y_l]\big]\bigg)\nn\\
\leq& P\bigg(\mmd_u^2[Y_k,\overline Y_k]-\mmd_u^2[Y_l,\overline Y_l]-\mE\big[\mmd_u^2[Y_k,\overline Y_k]-\mmd_u^2[Y_l,\overline Y_l]\big]\nn\\
&\quad>((1-\alpha)^2-\alpha^2)\mmd^2[p,q])-\xi\bigg),
\end{flalign}
for large enough $n$.

Applying McDiarmid's inequality, we obtain,
\begin{flalign}
P_e\leq& \exp\bigg(\log((n-s)s)-\frac{m((1-2\alpha)\mmd^2[p,q]-\xi)^2}{16K^2(1+ \Theta(\frac{1}{n}))}\bigg).
\end{flalign}
Since $\xi$ can be arbitrarily small, we conclude that if,
\begin{flalign}
m\geq\frac{16K^2(1+\eta)}{(1-2\alpha)^2\mmd^4[p,q]}\log(s(n-s)),
\end{flalign}
where $\eta$ is any positive constant, then $P_e\rightarrow 0$, as $n\rightarrow \infty$. It is also clear that if the above condition is satisfied, $P_e$ converges to zero exponentially fast with respect to $m$.

%
%

\section{Proof of Theorem \ref{thm:sunknownnonref}}\label{proof:sunknownnonref}
%
We analyze the performance of the test \eqref{test:sunknown_nonref}.
Without loss of generality, we assume that the first $s$ sequences are the anomalous sequences. Hence,
\begin{flalign}
P_e=P\bigg(&\big(\exists 1\leq l\leq s : \mmd_u^2[Y_l,\overline Y_l]\leq\delta_n \big)\text{or } \big( \exists s+1 \leq k\leq n:   \mmd_u^2[Y_k,\overline Y_k]>\delta_n   \big)\bigg)\nn\\
\leq\sum_{l=1}^{s}&P\bigg(\mmd_u^2[Y_l,\overline Y_l]\leq\delta_n\bigg)+\sum_{k=s+1}^nP\bigg(\mmd_u^2[Y_k,\overline Y_k]>\delta_n \bigg).
\end{flalign}
Using the fact that $\frac{s}{n}\rightarrow 0$ as $n\rightarrow \infty$,
and using \eqref{mmd:y1y1bar} and \eqref{mmd:ykykbar} we obtain,
\begin{flalign}
&\mE\big[\mmd_u^2[Y_l,\overline Y_l]\big]\rightarrow \mmd^2[p,q],\label{eq68}\\
&\mE\big[\mmd_u^2[Y_k,\overline Y_k]\big]\rightarrow 0,
\end{flalign}
as $n\rightarrow \infty$, for $1\leq l\leq s$ and $s+1\leq k\leq n$.

Due to \eqref{eq68}, for any constant $\epsilon$,   $-\mE\big[\mmd^2_u[Y_l,\overline Y_l]\big]<-\mmd^2[p,q]+\epsilon$ for large enough $n$.

For $1\leq l\leq s$, we drive,
\begin{flalign}
P&\bigg(\mmd_u^2[Y_l,\overline Y_l]\leq\delta_n\bigg)\nn\\
&=P\bigg(\mmd_u^2[Y_l,\overline Y_l]-\mE\big[\mmd_u^2[Y_l,\overline Y_l]\big]\leq -\mE\big[\mmd_u^2[Y_l,\overline Y_l]\big] +\delta_n\bigg)\nn\\
&\leq P\bigg(\mmd_u^2[Y_l,\overline Y_l]-\mE\big[\mmd_u^2[Y_l,\overline Y_l]\leq -(\mmd^2[p,q]-\epsilon-\delta_n)\bigg),
\end{flalign}
for large enough $n$.
%
Therefore, by applying McDiarmid's inequality, we obtain,
\begin{flalign}
P&\bigg(\mmd_u^2[Y_l,\overline Y_l]\leq\delta_n\bigg)\nn\\
&\leq \exp\bigg(   -\frac{2(\mmd^2[p,q]  -\epsilon-  \delta_n)^2}{\frac{16K^2}{m}(1+ \Theta(\frac{1}{n}))+\frac{16K^2}{m}(1+ \Theta(\frac{1}{n}))}  \bigg)\nn\\
&=\exp\bigg(   -\frac{m(\mmd^2[p,q]  -\epsilon-  \delta_n)^2}{16K^2(1+ \Theta(\frac{1}{n}))}  \bigg),
\end{flalign}
for large enough $n$.

For $s+1\leq k\leq n$,
\begin{flalign}
P&\bigg(\mmd_u^2[Y_k,\overline Y_k]>\delta_n \bigg)\nn\\
&=P\bigg(\mmd_u^2[Y_k,\overline Y_k]-\mE\big[\mmd_u^2[Y_k,\overline Y_k]\big]>\delta_n-\mE\big[\mmd_u^2[Y_k,\overline Y_k]\big]\bigg).
\end{flalign}
Using the fact that $\frac{s^2}{n^2\delta_n}\rightarrow 0$ as $n\rightarrow\infty$, we can show that
\[\frac{\mE\big[\mmd_u^2[Y_k,\overline Y_k]\big]}{\delta_n}\rightarrow 0,\]
as $n\rightarrow \infty$.
Hence, for large enough $n$, $\delta_n>\mE\big[\mmd_u^2[Y_k,\overline Y_k]\big]$. Therefore, using McDiarmid's inequality, we have
\begin{flalign}\label{eq72}
P&\bigg(\mmd_u^2[Y_k,\overline Y_k]>\delta_n \bigg)\nn\\
&\leq \exp\bigg(   -\frac{2(\delta_n-\mE\big[\mmd_u^2[Y_k,\overline Y_k]\big] )^2}{\frac{16K^2}{m}(1+ \Theta(\frac{1}{n}))+\frac{16K^2}{m}(1+ \Theta(\frac{1}{n}))}  \bigg)\nn\\
&=\exp\bigg(   -\frac{m(\delta_n-\mE\big[\mmd_u^2[Y_k,\overline Y_k]\big])^2}{16K^2(1+ \Theta(\frac{1}{n}))}  \bigg).
\end{flalign}
Therefore,
\begin{flalign}
P_e&\leq s\exp\bigg(   -\frac{m(\mmd^2[p,q]  -\epsilon-  \delta_n)^2}{16K^2(1+ \Theta(\frac{1}{n}))}  \bigg)\nn\\
&+(n-s)\exp\bigg(   -\frac{m(\delta_n-\mE\big[\mmd_u^2[Y_k,\overline Y_k]\big])^2}{16K^2(1+ \Theta(\frac{1}{n}))}  \bigg)\nn\\
&=\exp\bigg( \log s  -\frac{m(\mmd^2[p,q]  -\epsilon-  \delta_n)^2}{16K^2(1+ \Theta(\frac{1}{n}))}  \bigg)\nn\\
&+\exp\bigg(\log(n-s)   -\frac{m(\delta_n-\mE\big[\mmd_u^2[Y_k,\overline Y_k]\big])^2}{16K^2(1+ \Theta(\frac{1}{n}))}  \bigg),
\end{flalign}
for large enough $n$.
Hence, we conclude that if
\begin{flalign}
m\geq\frac{16(1+\eta)K^2}{(\mmd^2[p,q] -  \delta_n)^2}\log s,
\end{flalign}
and
\begin{flalign}
m\geq\frac{16(1+\eta)K^2}{(\delta_n-\mE\big[\mmd_u^2[Y_k,\overline Y_k]\big])^2}\log(n-s),
\end{flalign}
where $\eta$ is any positive constant, then $P_e\rightarrow 0$, as $n\rightarrow \infty$.

When $s=0$, $P_e=\sum_{k=1}^nP(\mmd_u^2[Y_k,\overline{Y_k}]>\delta_n)$. Then applying \eqref{eq72}, we have if
\begin{flalign}
m\geq\frac{16(1+\eta)K^2}{(\delta_n-\mE\big[\mmd_u^2[Y_k,\overline Y_k]\big])^2}\log n,
\end{flalign}
where $\eta$ is any positive constant, then $P_e\rightarrow 0$, as $n\rightarrow \infty$.


\section{Proof of Proposition \ref{thm:conv}}\label{sec:proofcon}
We first introduce an interesting property of Gaussian distribution, which is useful for bounding the probability of error for our problem.

\begin{lemma}\cite{Hall1979}\label{lemma:extreme}
  For the standard Gaussian distribution with mean zero and variance one, there exists positive constants $c_1$ and $c_2$  such that the cumulative distribution function (CDF) $\Phi(x)$ of the standard Gaussian distribution satisfies the following inequalities:
  \begin{flalign}
    \frac{c_1}{\log n}< \underset{-\infty<x<\infty}{\sup}|\Phi^n(a_nx+b_n)-G(x)|<\frac{c_2}{\log n}
  \end{flalign}
  for all positive integer $n$, where $G(x)=e^{e^{-x}}$ (i.e., the CDF of the Gumbel distribution), $a_nb_n=1$. In particular, $b_n$ can be approximated as
\begin{flalign}
  b_n=\sqrt{2\log n}-\frac{\frac{1}{2}\log (4\pi \log n)}{\sqrt{2\log n}}+O\left(\frac{1}{\log n}\right).
\end{flalign}
\end{lemma}

Our main idea of the proof is to show that under a certain distribution pair $p$ and $q$, even the optimal parametric test is not consistent under the condition given in the theorem. This thus implies that under the same condition, no nonparametric test is universally consistent for arbitrary $p$ and $q$. Towards this end, we consider the case, in which $p$ and $q$ are Gaussian with the same variance but mean shift, i.e., $p=\mathcal{N}(0,1)$ and $q=\mathcal N(1,1)$. The optimal test with known $p$ and $q$ is the following maximum likelihood (ML) test.
\begin{flalign}
  \hat i=\underset{1\leq i\leq n}{\arg \max} \{P_i(Y^{nm})\},
\end{flalign}
where $P_i(Y^{nm})$ denotes the probability of $Y^{nm}$ if the $i$-th sequence is anomalous.
The probability of error  under the ML test is given by:
\begin{flalign}
  P_e=\frac{1}{n}\sum_{i=1}^n \mathcal P_i\Big(P_i(Y^{nm})\leq \underset{k\neq i}{\max} P_k(Y^{nm})\Big),
\end{flalign}
where $\mathcal P_i$ denotes the probability evaluated when $i$-th sequence is anomalous.
By the symmetry of the problem,
\begin{flalign}
\mathcal P_i\Big(P_i(Y^{nm})\leq \underset{k\neq i}{\max} P_k(Y^{nm})\Big)=\mathcal P_j\Big(P_j(Y^{nm})\leq \underset{k\neq j}{\max} P_k(Y^{nm})\Big),
\end{flalign}
for any $1\leq i,j\leq n$.
Hence, we have
\begin{flalign}
  P_e&= \mathcal P_1\Big(P_1(Y^{nm})\leq \underset{k\neq 1}{\max} P_k(Y^{nm})\Big)\nn\\
  &=\mathcal P_1\Big(\frac{1}{\sqrt{m}}\sum_{ i=1}^m Y_{1i}\leq \underset{2\leq k\leq n}{\max}\frac{1}{\sqrt{m}}\sum_{i=1}^mY_{ki}\Big).
\end{flalign}
For convenience, we  define   $B_1:=\frac{1}{\sqrt{m}}\sum_{ i=1}^m Y_{1i}$, and  $B_k:=\frac{1}{\sqrt{m}}\sum_{i=1}^mY_{ki}$, for $2\leq k\leq n$. Hence, $B_1\sim \mathcal N(\sqrt{m},1)$, and $B_k\sim \mathcal N(0,1)$, and they are independent from each other. With the above definitions, the probability of error can be written as
\begin{flalign}
  P_e&=\mathcal P\Big(B_1\leq \underset{2\leq k\leq n}{\max} B_k\Big)\nn\\
  &=1-\mathcal P\Big(\underset{2\leq k\leq n}{\max} B_k <B_1\Big)\nn\\
  &=1-\mathbb{E}_B \Big\{\Phi^{n-1}(B_1)\Big\}
\end{flalign}
where $\Phi$ is the CDF of $B_k$.

By Lemma \ref{lemma:extreme}, there exists a constant $c$ independent of $n$, such that for all positive integer $n$, and for all real values $x$,
\begin{flalign}
  G\Big(\frac{x-b_n}{a_n}\Big)-\frac{c}{\log n}\leq \Phi^n(x)\leq G\Big(\frac{x-b_n}{a_n}\Big)+\frac{c}{\log n},
\end{flalign}
where $a_n$, $b_n$ are optimal normalizing constants, and $G(x)=e^{-e^{-x}}$ is the CDF of the Gumbel distribution.

Hence,
\begin{flalign}\label{eq:64}
  P_e&=1-\mathbb{E}_B \Phi^{n-1}(B_1)\nn\\
  &\geq 1-\frac{c}{\log (n-1)}-\mathbb{E}_B\Big\{G\Big(\frac{B_1-b_{n-1}}{a_{n-1}}\Big)\Big\}\nn\\
  &=1-\frac{c}{\log (n-1)}-\mathbb{E}_T\Big\{G(T)\Big\},
\end{flalign}
where $T=\frac{B_1-b_{n-1}}{a_{n-1}}$, and $T\sim \mathcal N(\frac{\sqrt{m}-b_{n-1}}{a_{n-1}},\frac{1}{a_{n-1}^2})$. The second term in \eqref{eq:64} can be further bounded as
\begin{flalign}
  \mathbb{E}_T&\Big\{G(T)\Big\}\nn\\
  &=\int_{-\infty}^0 e^{-e^{-t}}p(t)dt+\int_0^{+\infty} e^{-e^{-t}}p(t)dt\nn\\
  &\leq e^{-1}+P(T\geq 0)
\end{flalign}
where
\begin{flalign}
  P(T\geq 0)&=Q\left(\frac{0-\frac{\sqrt m-b_{n-1}}{a_{n-1}}}{\frac{1}{a_{n-1}}}\right)=Q(b_{n-1}-\sqrt m).
\end{flalign}
In the above equations, $Q(\cdot)$ denotes the tail probability of the standard Gaussian distribution.
If $m\leq{2(1-\eta)\log n}$, where $\eta$ is any positive constant, $b_{n-1}-\sqrt m\rightarrow \infty$, $Q(b_{n-1}-\sqrt m)\rightarrow 0$. Hence,
\begin{flalign}
  \lim_{n\rightarrow \infty}\mathbb E_T[{G(T)}]\leq e^{-1}.
\end{flalign}
Thus, with $\frac{c}{\log n}\rightarrow 0$
\begin{flalign}
\lim_{n\rightarrow\infty}P_e\geq 1-e^{-1}\approx 0.6321 >0
\end{flalign}
as $n \rightarrow \infty$. Therefore, if $m=O (\log n)$, where $\eta$ is any positive constant, there exists no consistent test for any arbitrary distributions $p$ and $q$.
\renewcommand{\baselinestretch}{1}

\bibliographystyle{unsrt}
\bibliography{SequenceDetection}


\end{document}